\makeatother\documentclass[10pt,oneside,twocolumn,a4paper]{article}
\newtheorem{definition}{Definition}
\newtheorem{theorem}{Theorem}
\newtheorem{example}{Example}
\let\ps@plain\ps@empty
\def\@xivpt{14bp}
\def\@sect#1#2#3#4#5#6[#7]#8{%
  \ifnum #2>\c@secnumdepth
    \let\@svsec\@empty
  \else
    \refstepcounter{#1}%
    \protected@edef\@svsec{%
      \ifnum #2<4
        \hb@xt@10mm{\csname the#1\endcsname}\relax
      \else
        \hb@xt@12mm{\csname the#1\endcsname}\relax
      \fi}%
  \fi
  \@tempskipa #5\relax
  \ifdim \@tempskipa>\z@
    \begingroup
      #6{%
        \@hangfrom{\hskip #3\relax\@svsec}%
          \interlinepenalty \@M #8\@@par}%
    \endgroup
    \csname #1mark\endcsname{#7}%
    \addcontentsline{toc}{#1}{%
      \ifnum #2>\c@secnumdepth \else
        \protect\numberline{\csname the#1\endcsname}%
      \fi
      #7}%
  \else
    \def\@svsechd{%
      #6{\hskip #3\relax
      \@svsec #8}%
      \csname #1mark\endcsname{#7}%
      \addcontentsline{toc}{#1}{%
        \ifnum #2>\c@secnumdepth \else
          \protect\numberline{\csname the#1\endcsname}%
        \fi
        #7}}%
  \fi
  \@xsect{#5}}
\renewcommand\LARGE{\@setfontsize\LARGE{16}{20}}
\def\abstract#1{\def\@abstract{#1}}
\def\abstractEn#1{\def\@abstractEn{#1}}
\def\titleEn#1{\def\@titleEn{#1}}
\def\@maketitle{%
  \newpage
  \null
  \let \footnote \thanks
    {\LARGE\bfseries\RaggedRight \@title \par}%
    {\LARGE\bfseries\RaggedRight \@titleEn \par}%
    \vskip 1\baselineskip%
    {\normalsize
      \@author\par}%
    \vskip \baselineskip%
    {\section*{Abstract}
      \@abstractEn}%
  \par
  \vskip 3\baselineskip}
\renewcommand\section{\@startsection {section}{1}{\z@}%
                                   {-3.5ex \@plus -1ex \@minus -.2ex}%
                                   {\baselineskip}%
                                   {\normalfont\Large\bfseries\RaggedRight}}
\renewcommand\subsection{\@startsection{subsection}{2}{\z@}%
                                     {\baselineskip}%
                                     {1ex}%
                                     {\normalfont\large\bfseries\RaggedRight}}
\renewcommand\subsubsection{\@startsection{subsubsection}{3}{\z@}%
                                     {1\baselineskip}%
                                     {3bp}%
                                     {\normalfont\normalsize\bfseries\RaggedRight}}
\renewcommand\paragraph{\@startsection{paragraph}{4}{\z@}%
                                    {1\baselineskip\@plus1ex \@minus.2ex}%
                                    {3bp}%
                                    {\normalfont\normalsize\RaggedRight}}
\renewcommand\subparagraph{\@startsection{subparagraph}{5}{\parindent}%
                                       {3.25ex \@plus1ex \@minus .2ex}%
                                       {-1em}%
                                      {\normalfont\normalsize\bfseries\RaggedRight}}
\newcommand\Mark[1]{\textsuperscript#1}
\title{Data-Driven Probabilistic Evaluation of Logic Properties with \\ PAC-Confidence on Mealy Machines}
\author{Swantje Plambeck\Mark{1}, Ali Salamati\Mark{2}, Eyke Hüllermeier\Mark{2}, Goerschwin Fey\Mark{1}}
\affil{\Mark{1}Hamburg University of Technology, \Mark{2}Ludwig-Maximilians-Universit\"{a}t M\"{u}nchen}
\affil{E-Mail: swantje.plambeck@tuhh.de, salamati@tum.de}
\newcommand{\pr}{\mathbb{P}}
\begin{document}

\maketitle

\section{Introduction}
\label{sec:intro}

Cyber-Physical Systems (CPS) are systems which combine continuous physical processes with discrete behavioral modes.
CPS are ubiquitous in a wide range of applications, spanning from self-driving cars, power grids, and traffic networks to integrated medical devices.
The assurance of safety for such intricate systems is of high significance.

Whenever a model is available, reachability analysis and model checking can be applied to verify system properties~\cite{HR:2004}.
Such formal methods have extensions also for hybrid systems~\cite{tabuada09}, discrete dynamical systems~\cite{belta2017formal}, and probabilistic cases~\cite{kwiatkowska2009prism}.
To use formal methods also on CPS with continuous dynamics, abstraction-based methods as in \cite{zhang2010safety} are developed to verify the safety on a discrete abstraction of a hybrid system. \cite{zhang2010safety} use a method inspired by abstraction refinement to find an abstraction of a probabilistic hybrid automaton to verify safety properties.

In this paper, we focus on the discrete domain of systems modeled by Mealy machines.
Finite State Machine (FSM) models, especially Mealy machines are widely used to model the discrete behavior of CPS like communication protocols \cite{AHKOV2012}.
Instead of formal verification, we determine a probabilistic reachability, i.e., the probability of reaching a set of safe states.

Unfortunately, a valid discrete model in form of a Mealy machine is often not available. Manually extracting a precise model that fully represents a system is difficult or even impossible.
To overcome this problem, there has been a significant trend towards developing data-driven approaches to learn discrete models of complex systems~\cite{IHS2014,kazemi2024data}.
The same holds for the domain of safety analysis where data-driven approaches are used to either verify the safety of safety-critical systems or to synthesize safe controllers.

With this paper, we combine the ideas of discrete logic and probabilistic analysis. Precisely, we draw from the following concepts:
\begin{itemize}
	\item FSMs and Mealy machines as models for the discrete behavior of CPS,
	\item Probabilistic reachability analysis to determine the probability of reaching a set of safe states within a finite horizon of $n$ time steps,
	\item Probably Approximately Correct (PAC) learning algorithms to learn safety properties in a data-driven manner,
	\item Probabilistic safety guarantees to provide a safety guarantee with a confidence level.
\end{itemize}
With this combination, we provide a data-driven approach to determine the safety of systems modeled by Mealy machines with a probabilistic confidence.

For our approach, we assume a real-world system, that has an FSM abstraction, specifically a Mealy machine representation.
For such a representation, we derive a probability measure for reaching a set of safe states on a finite horizon of $n$ time steps.
We base on the PAC learning algorithms described by Valiant~\cite{Valiant1984} to learn the set of safe paths of the system.
This learning step is a data-driven method collecting observations on the system and follows an active learning paradigm.
Thus, during learning new learning data is sampled in a guided way.
The guarantees of PAC learning provide a probabilistic confidence on the accuracy of the learned set of safe paths.

The PAC formulation has two probabilistic levels. First, a probabilistic guarantee on the accuracy of the learned set and second a confidence of this guarantee based on the amount of data used for learning.
We add a third probabilistic level as a result of the original reachability problem. 
This probability serves, e.g., as a safety guarantee for the real-world system.

An additional contribution of this paper is the implementation of the proposed method including the PAC learning algorithms stated by Valiant~\cite{Valiant1984}.
Our implementation is publicly available.\footnote{\url{https://github.com/TUHH-IES/golddnf4safety}}
In an evaluation, we apply the method to a real-world-inspired system, the Automated Lane-Keeping System (ALKS).
The evaluation also compares to a purely stochastic method, which supports the capabilities of our approach in determining the safety of a system.

In the next section, we provide related work on data-driven model learning and safety verification.
In Section~\ref{sec:problemstatement}, the concept of a Mealy machines is introduced through a coffee machine example. Additionally, the problem is outlined, and the concept of reachability is formally defined. In Section~\ref{sec:modelbased}, we present the main result, which allows us to apply a probability measure to the probabilistic reachability of a Mealy machine with an unknown model.
The proposed method is evaluated in Section~\ref{sec:casestudies} through a practical case study. Section~\ref{sec:conclusion} concludes our paper.

\section{Related Work}

While our approach focuses on discrete system abstractions, there are several related works for the complete range of fully continuous systems over hybrid systems and discrete abstractions.
For safety verification in the continuous domain, the authors in \cite{prajna2004safety} and \cite{yang2020efficient} use a so-called barrier certificate to verify the safety of nonlinear and hybrid systems. A barrier certificate is a function that can separate safe and unsafe regions for a dynamical system.

The use of data to construct abstractions and check properties of dynamical systems has been studied in \cite{kazemi2024data} and \cite{majumdar2023neural}.
In \cite{kazemi2024data}, a data-driven method was introduced to learn abstractions of a dynamical system, along with formal confidence bounds to ensure the correctness of the learned abstractions.
Building on this, \cite{majumdar2023neural} proposed a compression scheme using neural network representations to alleviate the memory bottleneck of the abstraction-based method in \cite{kazemi2024data}, while preserving the formal guarantees.
A data-driven and model-based formal verification approach for partially unknown Linear Time-Invariant (LTI) systems is presented in \cite{haesaert2017data}.
In these works, the authors propose a method based on Bayesian inference and reachability analysis to provide confidence that a physical system affected by noisy measurements satisfies a given bounded-time temporal logic specification.
In \cite{polgreen2017automated}, a method based on Bayesian inference and model checking is developed for Markov decision processes. The recent results in \cite{asalamati2020} extend those of \cite{haesaert2017data} and \cite{polgreen2017automated} to the verification of stochastic LTI systems under specifications expressed as signal temporal logic formulae.
A fully data-driven approach for safety verification is proposed in \cite{salamati2024data} to ensure the safety of stochastic dynamical systems when the model is unknown. As the aforementioned works focus on continuous systems, our work focuses on the discrete domain. Additionally, we do not use a conventional stochastic approach, but instead our data-driven probabilistic statement bases on the PAC learning guarantees by Valiant~\cite{Valiant1984} and the analysis of logic expressions.

There exist many works on automata learning, but they focus on the learning of the model itself, not on safety verification. Seminal works are \cite{Valiant1984} and \cite{Angluin1987} which prove general learnability properties and outline learning algorithms. The open source framework LearnLib incorporates several paradigms and algorithms for automata learning \cite{IHS2015} and there are further recent algorithms for automata learning \cite{IHS2014,AP2018,GSPO2015}. One reason why learned automaton models are not used for safety verification is that automaton models are not guaranteed to be correct in practical learning scenarios. Nevertheless, the theory on Probably Approximately Correct (PAC) learning \cite{Valiant1984,kearns1994} can be used to provide a probabilistic guarantee on the accuracy of the learned model.
Other works utilizing the PAC learning framework for safety verification are \cite{chen2016pac} and \cite{prandini2023datadriven}. In \cite{chen2016pac}, the authors use PAC learning for probabilistic software verification. A PAC learning algorithm for automata learning is used to learn an automaton model of the feasible paths in the program. This model is then used to determine with PAC confidence that there exist no feasible error path or counterexample error paths are identified.
In \cite{prandini2023datadriven}, the authors use PAC learning for a probabilistic reachable set of dynamical systems. Two methods to estimate the reachable set are suggested. The first method uses a convex scenario optimization. The second approach uses empirical risk minimization. In contrast to this work, we focus on discrete reachability and, thus, use classical PAC formulations.

\section{Preliminaries}
\label{sec:problemstatement}

In this section, we introduce preliminary concepts such as PAC learning and Mealy machines. Additionally, we formalize the reachability problem.

\subsection{PAC Learning of Boolean Expressions}
\label{sec:pac-valiant}

The Probably Approximately Correct (PAC) learning approach is introduced by Valiant in \cite{Valiant1984} for learning logical programs. Programs are PAC-learnable if 
\begin{enumerate}
	\item There exists a learning algorithm that has polynomial complexity in an adjustable parameter $h$ and the number $t$ of variables.
	\item The learning algorithm deduces at least with probability $(1-h^{-1})$ a program that never outputs one when it should not and outputs zero when it should not at most with probability $h^{-1}$.
\end{enumerate}

Valiant proves that boolean expressions are PAC-learnable if the cardinality of the learning data set $L$ fulfills the following condition
\begin{equation}
	L \geq 2 h (d + \log_e(h)),
	\label{eq:pac-valiant}
\end{equation}
where $d$ is the number of positive results of the program.

Valiant \cite{Valiant1984} also specifies corresponding learning algorithms for learning Conjunctive Normal Form (CNF) and Disjunctive Normal Form (DNF) expressions.

\subsection{Automata \& Mealy Machines}
The discrete system behavior of a CPS can be modeled by a Mealy machine as it represents the relation between inputs and outputs of the system over a set of discrete states.

\begin{figure}[h]
	\centering
	\includegraphics[width=1\linewidth]{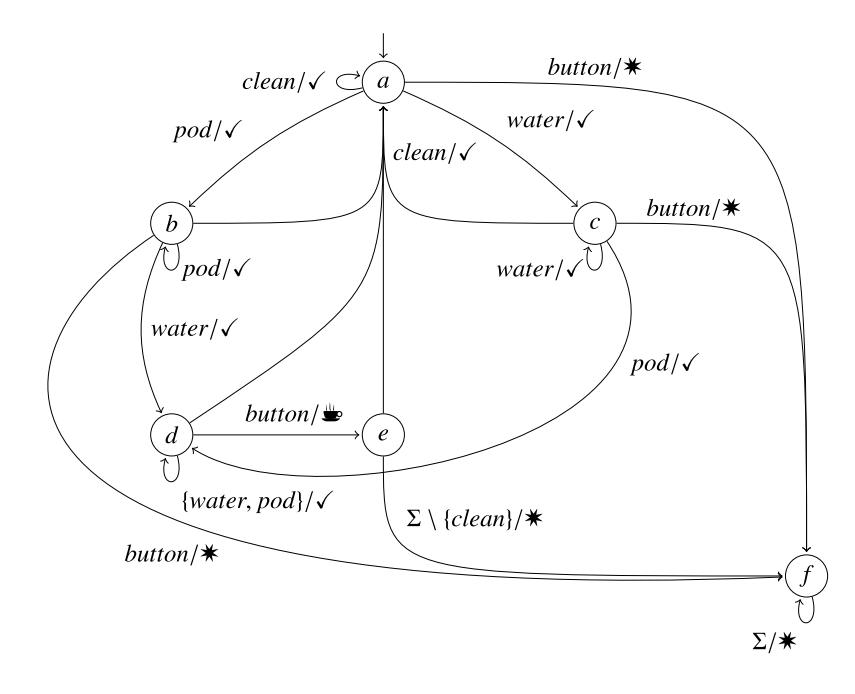}
	\caption{Exemplary Automaton model \cite{SHM2011}}
	\label{fig:coffee}
\end{figure}

\begin{definition}
	A Mealy machine $A$ is a tuple $(S,I,O, \alpha, \beta, q_0)$, where
	$S$ is the set of states,
	$I$ is the input alphabet,
	$O$ is the output alphabet,
	$\alpha : I \times S \rightarrow S$ is the transition function,
	$\beta : I \times S \rightarrow O$ is the output function, and
	$q_0 \in Q$ is the initial state. 
	\label{def:deterministicc}
\end{definition}

\begin{example}
	Fig.~\ref{fig:coffee} shows a coffee machine modelled as a Mealy machine. The figure shows a Mealy machine with six states and transitions between the states which are labeled by the input and output symbols.
	The automaton model in Fig.~\ref{fig:coffee} can be expressed as the following Mealy machine. 
	\begin{itemize}
		\item $S = \{a,b,c,d,e,f\}$
		\item $I = \{\text{clean}, \text{button}, \text{water}, \text{pod}\}$
		\item $O = \{$\checkmark, \Coffeecup, \ding{88}$\}$ 
		\item $q_0=a$
	\end{itemize}
	\label{ex:coffee}
\end{example}

\subsection{Reachability Problem}

The forward reachability problem is a fundamental problem in formal verification. Here, we consider reachability on a finite horizon of $n$ time steps. Given a transition function $\alpha$ of a system, we derive a tracing function $\Phi : I^n \rightarrow S$:
\begin{align}
	\Phi_n(i_1,...,i_n) = \alpha(i_n,...,\alpha(i_2,\alpha(i_1,q_0))).
	\label{eq:tracing}
\end{align}
The tracing function $\Phi_n$ provides the state of the system after $n$ time steps for a given input sequence $i_1,...,i_n$ starting from the initial state $q_0$.

With this tracing function, the reachable set of states $R_n$ on a finite horizon of length $n$ is defined as follows:
\begin{align}
	R_n = \{\Phi_n(i_1,...,i_n) | i_1,...,i_n \in I\}.
	\label{eq:reachable}
\end{align}
Thus, the reachable $R_n$ contains all states that are reachable on a finite horizon of $n$ time steps.

\section{Data-Driven Probabilistic Reachability Analysis of Mealy Machines}
\label{sec:modelbased}

In this section, we introduce a model-based approach to analyze the probability of reaching a set of states on a finite horizon of a system.
The system is observed with a discrete abstraction.
We assume that the system has an implicit Mealy machine representation $A$.
Further, there is a set of states $\mathcal{X} \subseteq S$ for which we determine the probability of reaching this set on a finite horizon of $n$ time steps, i.e.
\begin{equation}
	\pr(x \in \mathcal{X} | x \in \Phi_n(i_1,...,i_n)).
	\label{eq:reachability}
\end{equation}

\subsection{Probabilistic Reachability Formulation}

As the implicit Mealy machine representation is unknown, also the set $\mathcal{X}$ is usually not directly given.
Instead, we assume that whether a state in $\mathcal{X}$ is reached is \emph{observable}, i.e., after execution of an input sequence, it is clear whether a state in $\mathcal{X}$ has been reached or not.
Even though the approach is agnostic to whatever the meaning of $\mathcal{X}$ is, we assume for simplicity that $\mathcal{X}$ is the set of safe states of the system.
With this, it is also likely that we know whether an observation is safe or not.

\begin{example}
	For the coffee machine example from Example~\ref{ex:coffee}, we consider the set of safe states $\mathcal{X} = S \setminus \{f\}$.
	Reaching the unsafe state $f$ is observed through the output \emph{error} (represented by the $\ast$ in Fig.~\ref{fig:coffee}).
\end{example}

Our goal is to determine the probability of the system to be safe on a finite horizon of $n$ time steps.
For the unknown Mealy machine abstraction, we do this using a data-driven approach.
We use a PAC-based learning approach which provides a confidence for the safety probability.
The idea is similar to an explicit-state model checking~\cite{holzmann2018explicit}, i.e., we try to enumerate the safe paths of the system.
The enumeration is done with a set $g$:
\begin{equation}
	g = \{m_1,...,m_{|g|}\}, \quad m_k = \{\langle\tau_1,i_1\rangle,...,\langle\tau_{l_k},i_{l_k}\rangle\},
	\label{eq:maps}
\end{equation}
where $l_k$ is the length of the $k$-th map $m_k$ in $g$.
The set $g$ consists of path-generalizations $m_k$.
Every path generalization is represented by a map between time steps $\tau \in [1..n]$ to input $i \in I$.
Monomials encode paths on the system by encoding input sequences starting in the initial state of the system.
\begin{align*}
	& m = \{\langle\tau_1,i_1\rangle,...,\langle\tau_{l_i},i_{l_i}\rangle\} \mapsto\\
	& \left\{ [\tilde{i}_1, ... \tilde{i}_n] \text{ with } \tilde{i}_{j} \in
	\begin{cases}
		\{m[\tau_j]\} & \text{if } \tau_j \in \{\tau_k | k\in [1..l_i]\} \\
		I & \text{otherwise}
	\end{cases}\right\}
\end{align*}
The input sequences are formed from the maps $m$ by taking the input at each time step for all tuples in $m$.
All time steps that are not in $m$ are considered as \emph{don't care} values.
Thus, for every \emph{don't care} value, all inputs are possible which results in a set of input sequences for this map.

\begin{example}
	\label{ex:coffee-dnf}
	For the coffee machine from Fig.~\ref{fig:coffee} and $n=2$ time steps, the input sequences for the safe paths are
	\begin{align*}
		\{[\text{clean},\text{clean}],[\text{clean},\text{water}],[\text{clean},\text{pod}], \\
		[\text{water},\text{pod}], [\text{water}, \text{water}],[\text{water},\text{clean}],
		\\ [\text{pod},\text{water}],[\text{pod},\text{pod}], [\text{pod},\text{clean}]\}.
	\end{align*}
	The input sequences result in the set
	\begin{align*}
		g = \{\{\langle 1, \text{clean}\rangle, \langle 2, \text{clean}\rangle\}, \\
		\{\langle 1, \text{clean}\rangle, \langle 2, \text{water}\rangle\}, \\
		\{\langle 1, \text{clean}\rangle, \langle 2, \text{pod}\rangle\}, \\
		\{\langle 1, \text{water}\rangle, \langle 2, \text{pod}\rangle\}, \\
		\{\langle 1, \text{water}\rangle, \langle 2, \text{water}\rangle\}, \\
		\{\langle 1, \text{water}\rangle, \langle 2, \text{clean}\rangle\}, \\
		\{\langle 1, \text{pod}\rangle, \langle 2, \text{water}\rangle\}, \\
		\{\langle 1, \text{pod}\rangle, \langle 2, \text{pod}\rangle\}, \\
		\{\langle 1, \text{pod}\rangle, \langle 2, \text{clean}\rangle\} \}.
	\end{align*}
\end{example}

Having a set $g$ at hand, we can reversely determine the number of safe paths $x_S$ of length $n$ as follows:
\begin{align}
	x_S = \sum_{m_k \in g} |I|^{n-l_k},
	\label{eq:paths}
\end{align}
where $l_k$ is again the length of the $k$-th map $m_k$ in $g$.

Knowing the number of safe paths and assuming that all sequences are observed equally likely, we finally derive the safety probability of the system on a finite horizon of $n$ time steps as follows:
\begin{align}
	\pr(\text{Safety}) = \frac{x_S}{|I|^n}.
	\label{eq:safety-prob}
\end{align}
This gives the ratio of safe paths, found from $g$ as $x_S$, to the total number of possible paths $|I|^n$.

\subsection{Data-Driven Identification}

The Mealy machine representation of the system is unknown, but we are able to simulate the system and to collect data from the abstract Mealy machine and use this data to learn the safety probability of the system.
We construct $g$ in a data-driven manner by collecting safe observations on the system. Then, we generalize observations to maps $m_k, k\in [1, \dots, |g|]$.
This generalization determines whether the input at any time step is relevant for the safety of the system by checking the safety of the system for all possible inputs at this time step.
Non-relevant inputs are removed from $m_k$.
The learning process starts from an initial learning set consisting of $L$ input sequences which lead to safe paths in the system.
These paths form candidate maps.
The generalization procedure then implements an active learning scenario where additional queries are made to the system to determine whether safe paths are generalizable.
The queries are assumed to answer correctly, i.e., a query always provide information on whether a sample is safe or unsafe.
This guided learning process collects samples that provide relevant information for the learning process.

Next, we introduce the main theorem that specifies the safety probability of an unknown Mealy machine using an initial learning set of size $L$.

\begin{theorem}
	Given a deterministic Mealy machine as defined in Definition~\ref{def:deterministicc} and a set of size $L$.
	Assume  $M = |I|^n$, where $|I|$ is the number of inputs of the Mealy machine, and $n$ is number of time steps. One has
	\begin{align}
		\pr(\text{Safety}) = \frac{x_S}{M},
		\label{eq:safety}
	\end{align}
	where $x_S$ is the number of safe paths of length $n$ in the system according to Equation~\ref{eq:paths}.
	Equation~\ref{eq:safety} holds with a confidence of at least $1-h^{-1}$ if the cardinality $L$ of the initial learning data fulfills the following condition
	\begin{equation}
		L \geq 2 h (x_S + \log_e(h)),
		\label{eq:pac}
	\end{equation}
	where $h$ is a chosen parameter.
	\label{theorem:main}
\end{theorem}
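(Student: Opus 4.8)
The plan is to recast the safe-path enumeration as a Boolean DNF-learning problem and then invoke Valiant's PAC guarantee from Section~\ref{sec:pac-valiant} almost verbatim. First I would introduce the Boolean target function $f : I^n \to \{0,1\}$ defined by $f(i_1,\dots,i_n) = 1$ exactly when the traced state $\Phi_n(i_1,\dots,i_n)$ lies in $\mathcal{X}$, i.e.\ when the input sequence is safe. Encoding each input symbol over finitely many Boolean variables (for example a one-hot encoding in $n|I|$ variables) turns $f$ into an ordinary Boolean function, and each map $m_k \in g$ becomes a monomial that pins the inputs at the time steps it lists while leaving the remaining \emph{don't care} positions unconstrained. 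The disjunction of these monomials is then a DNF representation of $f$, so constructing $g$ is exactly the task of learning a DNF for $f$.

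The central bookkeeping step is to show that the number of \emph{positive results} of this DNF coincides with $x_S$. A monomial of length $l_k$ is satisfied by precisely $|I|^{n-l_k}$ input sequences, since the free time steps range over all of $I$; provided the monomials cover disjoint sets of sequences, the total number of sequences on which $f$ evaluates to $1$ is $\sum_{m_k \in g} |I|^{n-l_k}$, which is exactly $x_S$ by Equation~\ref{eq:paths}. Consequently the quantity Valiant denotes by $d$, the number of positive results, equals $x_S$ in our instance.

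With this identification I would apply the learnability condition~\ref{eq:pac-valiant} directly: substituting $d = x_S$ yields the bound $L \geq 2h(x_S + \log_e(h))$ of Equation~\ref{eq:pac}, and Valiant's conclusion guarantees, with confidence at least $1-h^{-1}$, a learned DNF that never accepts an unsafe sequence and misses safe sequences only with probability at most $h^{-1}$. Conditioning on this high-confidence event, the learned $g$ faithfully enumerates the safe paths, the counting formula~\ref{eq:paths} returns the correct $x_S$, and Equation~\ref{eq:safety-prob} under the equally-likely-sequence assumption delivers $\pr(\text{Safety}) = x_S/M$ with $M = |I|^n$. Chaining the conditional correctness of the ratio with Valiant's confidence gives Equation~\ref{eq:safety} at confidence $1-h^{-1}$.

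The step I expect to be the main obstacle is making the reduction to Valiant's setting genuinely faithful. Two points need care: first, the active-learning generalization must be argued to output monomials that together cover \emph{disjoint} sets of sequences, otherwise the sum in Equation~\ref{eq:paths} overcounts and the clean identity $d = x_S$ fails; second, the assumption that every membership query is answered correctly must be matched to Valiant's oracle model so that his sample-complexity bound transfers without modification. Once these are secured, the remainder is a direct substitution into the already-established PAC result.
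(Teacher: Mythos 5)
Your proposal follows essentially the same route as the paper's proof: both reduce the construction of $g$ to Valiant's PAC learning of a DNF expression, identify the parameter $d$ (the number of positive results) with the number of safe paths $x_S$, and substitute into the sample-complexity bound of Equation~\ref{eq:pac-valiant} to obtain Equation~\ref{eq:pac} and the confidence $1-h^{-1}$. The paper's proof is in fact terser than yours and does not address the two caveats you flag (disjointness of the sequence sets covered by the monomials, and matching the active-query model to Valiant's oracle), so your attempt is, if anything, more careful on exactly the points the paper leaves implicit.
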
 

\begin{proof}
	This learning procedure is equivalent to the PAC learning of a Boolean expressions as described in Section~\ref{sec:pac-valiant}.
	Thus, PAC learning guarantees apply as follows: the learned set $g$ with probability $1-h^{-1}$ never considers a non-safe path as safe and at most with probability $h^{-1}$ considers a safe path as non-safe, if we initially use a learning set of size 
	\begin{equation*}
		L \geq 2 h (S + \log_e(h)).
	\end{equation*}
	According to the PAC learning formulation, $S$ is the maximum number of positive results which is the number $x_S$ of safe paths in the system.
	This number can be determined from the learned set $g$ as given in Equation~\ref{eq:paths}.
\end{proof}

Theorem~\ref{theorem:main} is a two staged probabilistic statement, where the first stage in Equation~\ref{eq:safety} provides the safety probability that results from the inspection of the learned set $g$.
The second stage in Equation~\ref{eq:pac} provides a probabilistic guarantee and confidence on the accuracy of the learned set in determining safe paths using the PAC formulation.
Note that the number of samples $L$ implicitly depends on the length $n$ of the input sequences as a larger number of time steps implies possibly more maps, i.e., larger $x_S$. 

\begin{example}
	For the coffee machine from Fig.~\ref{fig:coffee}, we start with $L=1000$ samples and $N=5$. The learned set $g$ has $x_S=272$ as all maps $m_k$ have a length of 5. This set is correct with a confidence of $1-h^{-1} = 0.45$, where $h=1.83$, i.e., 
	\begin{equation*}
		1000 \geq 997.7 = 2 \cdot 1.83 (272 + \log_e(1.83)).
	\end{equation*}
	The safety probability is $\frac{272}{|I|^N} = \frac{272}{4^5} = 0.27$.
\end{example}


\subsection{Implementation}

We implement the algorithm for learning a DNF from data as proposed in \cite{Valiant1984} and adapt it to our scenario. The algorithm is implemented in Java to interact with the automata learning framework LearnLib~\cite{IHS2015}. The implementation is publicly available on GitHub~\cite{git-repo}.

The main process for safety evaluation contains the following steps:
\begin{enumerate}
	\item Define the confidence level $1-h^{-1}$.
	\item Determine the number of samples $L$ according to Equation~\ref{eq:pac}.
	\item Learn the set $g$ of the safety property using $L$ samples according to \cite{Valiant1984} and calculate the number of safe paths $x_S$ according to Equation~\ref{eq:paths}.
	\item Compute $M = |I|^N$.
	\item Calculate the safety property $\pr(\text{safety})=\frac{x_S}{M}$.
\end{enumerate}
Input to the algorithm is the parameter $h$ which determines the desired confidence level.
Based on the confidence level, the amount of initial learning data $L$ is derived.
The algorithm learns a set $g$ for the safety property of the Mealy machine model. The safety probability is calculated based on the size of the set $g$. 
In the case that instead of the confidence level only an initial learning set of size $L$ is known, an alternative strategy is to first learn the set $g$ based on this initial learning set. Then, the number of safe paths $x_S$ can be determined according to Equation~\ref{eq:paths}. Solving Equation~\ref{eq:pac} for $h$ finds the confidence level for the initial learning set.
If a desired confidence level is not reached, the algorithm can be rerun with a larger sample size.

Algorithm~\ref{algo:learn} implements the algorithm for identification of DNF formulae proposed by Valiant in \cite{Valiant1984}.

\begin{algorithm}
	\begin{algorithmic}[1]
		\Function{learnSafeSet}{Automaton A, Int L}
		\State const n = length of history
		\State g = False
		\For{$i \gets 1 \text{ to } L$}
		\State $v \gets$ \Call{getExample}{A, $n$}
		\If{$v \nRightarrow g$}
		\For{$i = 1 \text{ to } t$}
		\If{$\tau_i$ is determined by $v$}
		\State $\tilde{v} \gets v(\tau_i=\ast)$
		\If{\Call{Oracle}{A, $\tilde{v}$} = True}
		\State $v \gets \tilde{v}$
		\EndIf
		\EndIf
		\EndFor
		\State $m \gets$ time steps determined in $v$
		\State $g \gets g \sup \{m\}$
		\EndIf
		\EndFor
		\State \Return{$g$}
		\EndFunction
	\end{algorithmic}
	\caption{Algorithm to Learn a DNF Formula}
	\label{algo:learn}
\end{algorithm}

The function \emph{LearnSafeSet} uses the two helper functions \emph{getExample} and \emph{Oracle} which are defined in Algorithm~\ref{algo:sample} and Algorithm~\ref{algo:oracle}, respectively. The function \emph{getExample} samples a random, safe input sequence from the automaton and converts it to a candidate map. The function \emph{Oracle} creates all possible sequences that can be derived from the map $v$. Afterwards, it queries the automaton for each of these sequences and returns true if a safe state is reached, otherwise false is returned.

\begin{algorithm}
	\begin{algorithmic}[1]
		\Procedure{getExample}{Automaton A, Int $n$}
		\While{True}
		\State $in \gets$ A.randomInput($n$)
		\State $v$ = convertToMonomial($in$)
		\If{A.apply($in$) is \textit{safe}}
		\State \Return{$v$}
		\EndIf
		\EndWhile
		\EndProcedure
	\end{algorithmic}
	\caption{Algorithm to Sample from a System}
	\label{algo:sample}
\end{algorithm}

\begin{algorithm}
	\begin{algorithmic}[1]
		\Function{Oracle}{Automaton A, Monomial $v$}
		\State $I_s$ = convertToInputSeqs($v$) \Comment{All sequences that are possible with the given assignments}
		\For{$in$ in $I_s$}
		\If{A.apply($in$) is \textit{safe}}
		\State \Return{True}
		\EndIf
		\EndFor
		\State \Return{False}
		\EndFunction
	\end{algorithmic}
	\caption{Algorithm to Query the Oracle}
	\label{algo:oracle}
\end{algorithm}

The learning algorithm scales linearly with the number of samples $L$ and the number of possible variables $t$ in $g$ \cite{Valiant1984}.
The number of variables $t$ is determined by the number of inputs and time steps, while $L$ depends on the number of safe paths according to Equation~\ref{eq:pac-valiant}.
The size of $g$ is determined by the complexity of the system and is bounded by the number of time steps and the input alphabet size.
In the worst case, complexity grows exponentially with the number of time steps and the input alphabet size.
Anyways, benign systems, e.g., with unsafe deadlocks, will have a much smaller size of $g$.
Furthermore, the Mealy machine usually represents an abstraction of an original, complex systems, thus, inherently reducing the complexity of the problem.

\section{Case Studies}
\label{sec:casestudies}

In this section, we present two case studies to validate the proposed methodology. In particular, we study an Automated Lane-Keeping System (ALKS).

\subsection{Automated Lane-Keeping System (ALKS)}

We consider a car with a manual steering system. We discuss two versions, one with and one without a supporting ALKS. The ALKS is a system that automatically steers the car to keep it in the lane. If the ALKS is used, the system supports the manual steering if the car is about to leave the lane.

\begin{figure}
	\centering
	\begin{tikzpicture}
		\node[circle, draw] at (0,0) (C) {C};
		\node[circle, draw] at (-2,2) (L) {L};
		\node[circle, draw] at (2,2) (R) {R};
		\node[circle, draw] at (0,4) (A) {A};
		
		\draw[->] (C) to [bend left] node[below,xshift=-0.9cm] {l / ok} (L);
		\draw[->] (L) to [bend left] node[above,xshift=0.5cm] {r / ok} (C);
		\draw[->] (C) to [bend left] (R);
		\draw[->] (R) to [bend left] node[below,xshift=0.9cm] {l / ok} (C);
		\draw[->] (L) to [bend left] node[above,xshift=-0.9cm] {l / alarm} (A);
		\draw[->] (R) to [bend right] node[above,xshift=0.9cm] {r / alarm} (A);
		
		\draw [->] (L) edge[loop left] node{s / ok} (L);
		\draw [->] (C) edge[loop below] node{s / ok} (C);
		\draw [->] (R) edge[loop right] node{s / ok} (R);
		
		\draw [->,dashed,blue] (A) edge[loop above] node{s,l,r / alarm} (A);
		\draw [->, dotted, red, xshift=4cm] plot [smooth, tension=3] coordinates {(A.east) (0.5,2) (C.east)} node[below,pos=0.4]{s,l,r / ok};
		
		\draw [->] (-1,-1) -- (C);
		
	\end{tikzpicture}
	\caption{Mealy machine model of a steering system which uses an ALKS (dashed-blue) and which does not use an ALKS (dotted-red)}
	\label{fig:alks}
\end{figure}
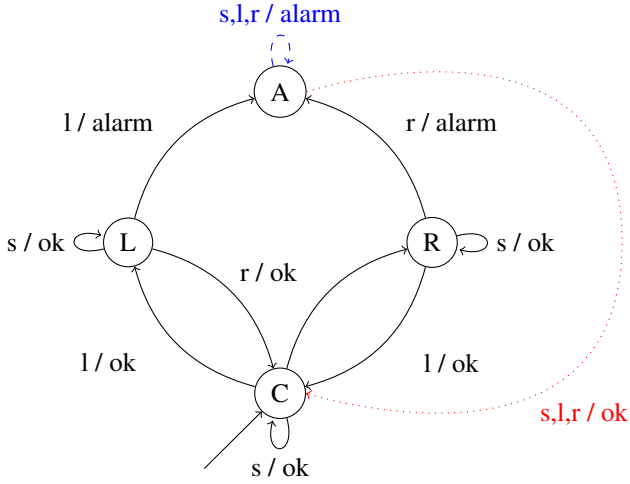

The Mealy machine representation of the system is shown in Fig.~\ref{fig:alks}. The states are $C$ for the car in the lane, $L$ for the car on the left lane boundary, $R$ for the car on the right lane boundary, and $A$ for the alarm state, i.e., out of lane boundaries. The inputs are $l$ for steering in the left direction, $r$ for steering in the right direction, and $s$ for the car moving straight. The outputs are $ok$ for the car in the lane and $alarm$ for the car out of the lane. The system with and without the ALKS differentiate in the transitions leaving state $A$. The system without ALKS stays in the alarm state while the system with ALKS counteracts the manual steering and brings the car back in the center of the lane, i.e., to state $C$.

In the following, we apply our method to estimate the safety probability from observed data with a confidence level determined by the size $L$ of the initial learning set.

\subsection{Results}

\begin{table}[]
	\centering
	\caption{Numerical Results of the Safety Analysis}
	\label{tab:results}
	\setlength{\tabcolsep}{0.4em}
	\begin{tabular}{|l|c|c|c|r|r|r|}
		\hline
		Example & N & d & L & $1-h^{-1}$ & $\pr_V$ & $\pr_L$ \\
		\hline
		wto ALKS & 3 & 17 & 1000 & 0.96 & 0.63 & 0.634\\
		wto ALKS & 4 & 41 & 1000 & 0.91 & 0.51 & 0.507\\
		wto ALKS & 5 & 99 & 1000 & 0.80 & 0.41 & 0.42\\
		wto ALKS & 10 & 952 & 1000 & 0.00 & 0.02 & 0.12\\
		ALKS & 3 & 23 & 1000 & 0.95 & 0.85 & 0.87\\
		ALKS & 4 & 71 & 1000 & 0.85 & 0.88 & 0.88\\
		ALKS & 5 & 207 & 1000 & 0.58 & 0.85 & 0.86\\
		ALKS & 10 & 988 & 1000 & $\approx$ 0.00 & 0.02 & 0.87\\
		\hline
	\end{tabular}
\end{table}

Table~\ref{tab:results} shows results for different time horizons $N$ and the ALKS with and without automated steering.
The number of samples used for learning is chosen to $L=1000$ for all examples. The column $1-h^{-1}$ gives the confidence in the found safety level $\pr_V$.
In addition to the safety level $\pr_V$, the safety level $\pr_L$ is given, which is a stochastic estimate of the safety level based on the evaluation of $L=1000$ random sequences of length $N$. The safety level $\pr_L$ is calculated as the ratio of safe sequences to the total number of sequences. The safety level $\pr_L$ is a stochastic estimate of the safety level $\pr_V$ and validates the results of the learning algorithm.

From the presented results, we observe that the confidence level falls as $N$ increases.
The reason for this is that for a larger time horizon more samples are needed to cover a larger range of the possible system behavior.
As we keep $L$ constant, the confidence falls with $N$.
Furthermore, we observe that the safety level $\pr_V$ decreases with increasing $N$ for the system without (wto) ALKS.
This is due to the fact that the Mealy machine representation shown above has a dead-end state in the unsafe state $A$.
Thus, the longer the time horizon, the higher the probability to end up in the dead-end state.
For the system with ALKS, the safety level is higher and stays constant for increasing $N$.
In this scenario, the system can recover from the unsafe state $A$ and return to the safe states.
Still, for large $N$, we have a low safety level, as the amount of samples in this case is not sufficient to estimate the number of safe paths.
This is adequately reflected in the confidence level $1-h^{-1}$ which is close to 0.
Still, the comparison of the safety level $\pr_V$ and $\pr_L$ shows that the learning algorithm struggles to estimate the safety level for large $N$.

An advantage of the approach in contrast to other formal methods is that it can be applied to systems where the model is unknown as it is data-driven, but the oracle needs access to the real system to collect data.
Even though the approach struggles in its scalability, the advantages in contrast to other stochastic methods are that it provides a PAC confidence level.
Furthermore, the active learning paradigm implemented in the oracle function allows for a more efficient, guided data collection and learning process.

\section{Conclusion}
\label{sec:conclusion}
The discrete behavior of many real-world applications such as autonomous systems can be modelled by Mealy machines. In this work, we develop a fully data-driven learning approach to assess the reachability problems, specifically focusing on reachability of safe regions of systems modeled by Mealy machines.
The approach combines several concepts from discrete logic, formal methods, stochastic analysis and safety verification from a new perspective.
We formulate the safety property within the context of a Mealy machine through a set of generalized paths encoding the safe paths of the system. Since the model is unknown, the safety behavior is learned via a PAC-learning approach from samples and active queries on the system.
The safety probability is provided with a PAC confidence.
We validate our methodology through practical case studies, demonstrating its efficacy in real-world scenarios.
Even though the approach struggles with scalability, the active learning algorithm implements efficient guided data collection and the resulting safety property provides a PAC confidence level.

\section*{Acknowledgment}
This work was supported by the German Ministry of Education and Research (BMBF) within the project AGenC under grant number 01IS22047A.
Furthermore, the authors would like to thank Maximilian Schmidt for the valuable feedback and Sadegh Soudjani for initial fruitful discussions. 

\bibliographystyle{ieeetran}
\bibliography{references1}
\end{document}